\newenvironment{e-p-sys}[2][]
{\begin{tikzpicture}
[dot/.style={circle,inner sep=1pt},line/.style={thick,gray}]
\foreach \x in {1,...,#2} \foreach \y in {1,...,#2} {
\ifnum\y=\x 
	\node(\x-\y) at ($(\x*-360/#2+360/#2+90:.08*#2^1.6)+(\y*-360/#2+360/#2+90:.05*#2^1.1)$) [dot,fill=black] {};
\else
	\node(\x-\y) at ($(\x*-360/#2+360/#2+90:.08*#2^1.6)+(\y*-360/#2+360/#2+90:.05*#2^1.1)$) [dot,fill=gray] {};
\fi
}
\node(label) {#1};}
{\end{tikzpicture}}
\newtheorem{thm}{Theorem}
\newtheorem{cnj}{Conjecture}
\newtheorem{definition}{Definition}
\newcommand{\KUI}{$\mathit{KU}$-Intro\-spection}
\newcommand{\AUI}{$\mathit{AU}$-Intro\-spection}
\newcommand{\CK}{\mathit{CK}}
\newcommand{\ck}{\mathit{ck}}
\title{Standard State Space Models of Unawareness\\(Extended Abstract)\footnote{A draft of the full version of this paper is available at \url{http://users.ox.ac.uk/\~hert2388/}}}
\author{Peter Fritz
\institute{Department of Philosophy,\\Classics, History of Arts and Ideas\\
University of Oslo, Norway}
\email{peter.fritz@ifikk.uio.no}
\and
Harvey Lederman
\institute{Department of Philosophy\\
New York University, USA}
\email{hsl306@nyu.edu}
}
\begin{document}
\maketitle

\begin{abstract}
The impossibility theorem of Dekel, Lipman and Rustichini \cite{dekellipmanrustichini1998sssmpu}
has been thought to demonstrate that standard state-space models cannot be used to represent unawareness. We first show that Dekel, Lipman and Rustichini do not establish this claim. We then distinguish three notions of awareness, and argue that although one of them may not be adequately modeled using standard state spaces, there is no reason to think that standard state spaces cannot provide models of the other two notions. In fact, standard space models of these forms of awareness are attractively simple. They allow us to prove completeness and decidability results with ease, to carry over standard techniques from decision theory, and to add propositional quantifiers straightforwardly.
\end{abstract}


\section{Introduction}

Dekel, Lipman and Rustichini \cite{dekellipmanrustichini1998sssmpu}, hereafter, ``DLR'', claim to show that ``standard state space models preclude unawareness''. Their claim has achieved the status of orthodoxy.\footnote{See, e.g., \cite[p.~2]{schipper2014u}, \cite{schipperforthcominga}, \cite[p.~78]{heifetzetal2006iu}, \cite[p.~305]{heifetzetal2008acmfiu}, \cite[p.~101]{heifetzetal2013ubast}, \cite[p.~220]{meierschipper2014bgwuaup}, \cite[p.~2790]{karniviero2013rb}, \cite[pp.~977--978]{li2009iswu}, \cite[p.~2454]{heinsalu2012eofiswuttloa}, \cite[p.~257]{heinsalu2014utswu}, \cite[p.~42]{galanis2013uot}, \cite[p.~516]{sillari2008qloaaipw}, and \cite{sillari2006moa}.}
The first task of this paper is to clear the way for standard state space models of unawareness by showing that the formal result DLR present does not establish their headline conclusion. DLR informally motivate certain axioms concerning unawareness, but in their formal impossibility result, they rely on the claim that these axioms hold at all states in the model. As section~2 argues, the assumption that axioms hold at all states of the model is unwarranted; in fact, DLR themselves reject it. While DLR's formal results are valid, they are not sufficiently general to rule out standard state space models of unawareness. As we show, the impossibility results do not hold if one assumes only that DLR's explicit assumptions about unawareness hold at some ``real'' states, as opposed to at all states. Even strengthening those explicit assumptions considerably does not reinstate the results.

But this does not yet vindicate standard state space models of unawareness. Section~3 presents a novel impossibility result which uses widely shared assumptions about awareness. The new impossibility theorem relies on the assumption that an agent who is aware of a conjunction is aware of its conjuncts. If awareness satisfies this assumption, then standard state space models do in fact preclude unawareness. We then distinguish three notions of awareness, and suggest that two important ones do not satisfy this assumption, leaving open the possibility that they could be adequately modeled by standard state space models.

The remainder of the paper continues in a more positive vein. We describe a simple class of standard state space models which represent key features of awareness. In section~4, we establish completeness and decidability for the logic of these models. We also show that adding propositional quantifiers, a topic which has presented major difficulties for existing approaches to awareness, is straightforward in our standard state space models. In section~5, we present one way of implementing a choice-based approach to decision theory within these models, and show how non-trivial unawareness is consistent with speculative trade.  Section~6 concludes.

\section{DLR's Triviality Result}

\subsection{Standard State Space Models}

Standard state space models for the knowledge and awareness of a single agent can be understood as certain tuples $\langle\Omega,k,a\rangle$. $\Omega$ is required to be a set, called the set of \emph{states}, from which a set of events is derived by taking an \emph{event} to be a set of states. $k$ and $a$ are functions on events, which represent the agent's knowledge and awareness, respectively: $k$ maps each event $E$ to the event $k(E)$ of the agent knowing $E$; $a$ similarly takes each $E$ to the event $a(E)$ of the agent being aware of $E$.

Such models are straightforwardly used to interpret a formal language in which one can talk about knowledge and awareness. Let $L$ be such a language built up from proposition letters $p,q,\dots$, using a unary negation operator $\neg$, a binary conjunction connective $\wedge$ and two unary operators $K$ and $A$, respectively ascribing knowledge and awareness to the agent. Formulas of this language are interpreted relative to a model $M=\langle\Omega,k,a\rangle$ and a valuation function $v$ which maps each proposition letter $p$ to the event $v(p)$. The interpretation uses a function $\llbracket\cdot\rrbracket_{M,v}$ which maps each formula $\varphi$ of $L$ to the event expressed by $\varphi$ in $M$, which can be understood as the set of states in which $\varphi$ is true in $M$. To state the constraints on such a function let $-E=\Omega\backslash E$.
\begin{itemize}
\item[] $\llbracket p\rrbracket_{M,v}=v(p)$
\item[] $\llbracket\neg\varphi\rrbracket_{M,v}=-\llbracket\varphi\rrbracket_{M,v}$
\item[] $\llbracket\varphi\wedge\psi\rrbracket_{M,v}=\llbracket\varphi\rrbracket_{M,v}\cap\llbracket\psi\rrbracket_{M,v}$
\item[] $\llbracket K\varphi\rrbracket_{M,v}=k(\llbracket\varphi\rrbracket_{M,v})$
\item[] $\llbracket A\varphi\rrbracket_{M,v}=a(\llbracket\varphi\rrbracket_{M,v})$
\end{itemize}
The agent being unaware of something can of course be understood as it not being the case that she is aware of it. We therefore syntactically use $U\varphi$ as an abbreviation for $\neg A\varphi$. Similarly, we introduce the other connectives of classical propositional logic as abbreviations, using $\vee$ for disjunction, $\to$ for material implication, $\leftrightarrow$ for bi-implication, and $\top$ and $\bot$ for an arbitrary tautology and contradiction, respectively. On the semantic side, we adopt the convention of writing $fg$ for the composition of functions $f$ and $g$, which allows us two write, e.g., $k-a(E)$ instead of $k(-(a(E)))$.

In order to express general constraints on these models, we say that a formula $\varphi$ is \emph{valid on $M$} if $\llbracket\varphi\rrbracket_{M,v}=\Omega$ for each valuation function $v$; this can be understood as requiring $\varphi$ to be true in every state of $M$ according to every valuation function. In order to limit this constraint to a particular state $\omega\in\Omega$, we say that $\varphi$ is \emph{valid in $\omega$} if $\omega\in\llbracket\varphi\rrbracket_{M,v}$ for each valuation function $v$.

These models count as ``standard'' in the sense of DLR. First, the events expressed by $A\varphi$ and $K\varphi$ are each a function of the event expressed by $\varphi$. (DLR call this ``event-sufficiency''.) Second, negation is interpreted as set-comple\-ment and conjunction as intersection, so that all tautologies of classical propositional logic, such as $p\vee\neg p$, are interpreted as the set of all states in every model. (DLR call this assumption ``real states''.)

\subsection{DLR on Standard State Space Models}

DLR introduce three constraints on awareness, which can be stated using the following three axioms:
\begin{itemize}
\item[] Plausibility: $Up\to(\neg Kp\wedge\neg K\neg Kp)$
\item[] \KUI{}: $\neg KUp$
\item[] \AUI{}: $Up\to UUp$
\end{itemize}
Their constraints on knowledge can be stated using the following three axioms:
\begin{itemize}
\item[] Necessitation: $K\top$
\item[] Monotonicity: $K(p\wedge q)\to(Kp \wedge Kq)$
\item[] Weak Necessitation: $Kp\to K\top$
\end{itemize}
Their main results are then:

\begin{thm}[DLR]
Let $M=\langle\Omega,k,a\rangle$ be a model on which Plausibility, \KUI{} and \AUI{} are valid.
\begin{description}
\item[1(i)] If Necessitation is valid on $M$, then $Ap$ is valid on $M$.
\item[1(ii)] If Monotonicity is valid on $M$, then $Kp\to Aq$ is valid on $M$.
\item[2] If Weak Necessitation is valid on $M$, then $Kp\to Aq$ and $Ap\leftrightarrow Aq$ are valid on $M$.
\end{description}
\end{thm}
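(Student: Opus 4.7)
The plan is to channel all three parts through a single lemma, namely that $k(\Omega)\subseteq a(E)$ for every event $E$; each conclusion then follows by combining this lemma with the knowledge axiom at hand.

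To prove the lemma, I would substitute the event $-a(E)$ for $v(p)$ in Plausibility. Its second conjunct, read semantically, becomes $k(-k(-a(E)))\subseteq a(-a(E))$. \KUI{} supplies $k(-a(E))=\emptyset$, so $-k(-a(E))=\Omega$ and the left-hand side collapses to $k(\Omega)$. Finally \AUI{}, whose semantic content is $a(-a(E))\subseteq a(E)$, extends the chain to $k(\Omega)\subseteq a(E)$.

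Part 1(i) is then immediate: Necessitation says $k(\Omega)=\Omega$, so the lemma forces $a(E)=\Omega$ for every $E$, i.e.\ $Ap$ is valid. For 1(ii), a preliminary observation is that Monotonicity makes $k$ monotone in set inclusion (from $E=E\cap\Omega$ we get $k(E)\subseteq k(E)\cap k(\Omega)\subseteq k(\Omega)$), and the lemma then chains to $k(E)\subseteq k(\Omega)\subseteq a(F)$, i.e.\ $Kp\to Aq$. For 2, Weak Necessitation is literally $k(E)\subseteq k(\Omega)$, so the same composition yields $Kp\to Aq$ immediately.

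The remaining claim in 2 that $Ap\leftrightarrow Aq$ is valid --- equivalently that $a$ is constant on events --- is the step I expect to require the most work, since the three awareness axioms supply only lower bounds on $a(E)$ and some further argument is needed to squeeze from above. My plan would be to instantiate Plausibility at events built from $a$ and $k$ (such as $a(F)$ or $-k(F)$), iterate \AUI{}, and exploit the uniform bound $k(X)\subseteq k(\Omega)$ from Weak Necessitation, aiming to show $a(E)\subseteq k(\Omega)$ and hence $a(E)=k(\Omega)$ for every $E$; identifying the right instance of Plausibility that delivers this upper bound is the main obstacle.
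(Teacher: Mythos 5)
The paper states this theorem without proof (it is imported from DLR), so there is no in-paper argument to compare against; judged on its own, your central lemma and three of the four conclusions are correct, and the lemma is exactly the pivot of DLR's published argument. Instantiating the second conjunct of Plausibility at $-a(E)$ gives $k(-k(-a(E)))\subseteq a(-a(E))$, \KUI{} ($k(-a(E))=\emptyset$) collapses the left side to $k(\Omega)$, and \AUI{} ($a(-a(E))\subseteq a(E)$) yields $k(\Omega)\subseteq a(E)$ for every event $E$. Your deductions of 1(i), of 1(ii) via $k(E)=k(E\cap\Omega)\subseteq k(E)\cap k(\Omega)$, and of $Kp\to Aq$ in 2 from Weak Necessitation are all sound.

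The step you flag as open --- $Ap\leftrightarrow Aq$ in part 2 --- is a genuine gap, and your plan of squeezing $a(E)\subseteq k(\Omega)$ out of further instances of Plausibility, \AUI{} and Weak Necessitation cannot succeed, because the hypotheses as listed in this paper do not entail it. Take $\Omega=\{1,2\}$, $k(X)=\emptyset$ for every event $X$, $a(\{1\})=\{2\}$ and $a(X)=\Omega$ for the other three events. Since $k$ is constantly empty, Plausibility, \KUI{} and Weak Necessitation are vacuously valid; \AUI{} holds because $a(-a(\{1\}))=a(\{1\})$ and $-a(X)=\emptyset$ otherwise; yet $a(\{1\})\neq a(\{2\})$, so $Ap\leftrightarrow Aq$ is not valid (while $Kp\to Aq$ is, consistent with what you proved). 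The structural reason is that the three awareness axioms only bound $a$ from below, so no upper bound of the form $a(E)\subseteq k(\Omega)$ is derivable. What does deliver the second clause is the converse direction of Plausibility, i.e.\ reading unawareness as Modica--Rustichini's $\neg Kp\wedge\neg K\neg Kp$, so that $a(E)=k(E)\cup k(-k(E))\subseteq k(\Omega)$ by Weak Necessitation and hence $a(E)=k(\Omega)$ for all $E$; without some such strengthening of the stated hypotheses, the $Ap\leftrightarrow Aq$ conclusion is simply not reachable, so the ``right instance of Plausibility'' you were hunting for does not exist.
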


Our presentation of DLR's result differs in superficial respects from their original presentation. DLR do not present their constraints in terms of the validity of certain axioms. Thus, for example, instead of requiring \KUI{} to be valid on $M$, they require $k-a(E)=\emptyset$ for all events $E$. However, it is a routine exercise to show that this condition is equivalent to the validity of our corresponding axiom. The same point holds for the other axioms. In short, our later models will not be escaping their triviality result by a sleight of hand which depends on this presentation.

One reason for the variant presentation is that it will facilitate the later exposition. It also serves to demonstrate that standard state space models as discussed here are equivalent to what are now commonly known as neighborhood or Scott-Montague frames (see \cite{scott1970aoml} and \cite{montague1970ug}). It is well known that given certain restrictions on the function interpreting knowledge, this function can be turned into a binary relation among states along the lines of those used by \cite{kripke1963saomli} and \cite{HintikkaKB}. This representation as a binary relation is, in turn, formally interchangeable with the ``possibility correspondences''  introduced by \cite{AumannAgreeing} (see also \cite{AumannKnowledge}) and used throughout economic theory (see, e.g. \cite{FaginKnowledge}).

\subsection{Two Kinds of States}

In response to their triviality results, DLR suggest distinguishing informally between ``real'' states and ``subjective'' states. As we understand it, this distinction can be explained as follows. An epistemic model makes predictions about how an agent or group of agents will or would behave in particular situations. The model makes predictions about these situations by including states which represent them. The real states in a model are the states which represent situations the model is intended to describe. The model predicts an agent will behave a certain way in a particular situation just in case the agent behaves that way in the real state which represents that situation. The predictions of a single model are given by what holds at all its real states; the behavioral theory of a class of models is given by what holds at all real states in all its models.

A state in an epistemic model is subjective if it figures in the specification of what the agent knows or is aware of at some real state. According to this way of understanding real and subjective states, states may be both real and subjective. Suppose we wish to represent an agent who knows that a particular coin will be flipped, but who will not learn the outcome of this coin flip. If our model is intended to make predictions no matter how the coin lands, the subjective states needed to specify the agent's knowledge (heads and ignorant, tails and ignorant) will be exactly the real states; every state will be both real and subjective. But as DLR recognize, there is no reason to require all states to play both roles. In the earlier example, if we only wanted to make predictions about the situation in which the coin comes up heads, we would not count one of the states (tails and ignorant) as a real state; even so, to represent the agent's ignorance given heads, a state where the coin comes up tails would still have to be included as a subjective state. The point can also be illustrated with a less artificial example. Consider an analyst who wishes to model the interactions of agents who are rational, but who do not believe each other to be rational. To represent the beliefs of these agents, the analyst must include subjective states in which the agents are irrational. But although she includes these subjective states, the analyst has no intention of eliminating the claim that the agents are rational from the predictions of her theory. Rather, it is understood informally that these subjective states are not real; they do not represent situations the analyst aims to describe. 

Put in the terms of our presentation, DLR's proposal is to allow subjective states in which the law of excluded middle, $p\vee\neg p$, may not be true. DLR have no intention of eliminating the law of excluded middle from the predictions of their theory. Rather, they introduce these subjective states to specify the agent's knowledge and unawareness at real states where classical logic holds. The theory of DLR's models is given by what holds at these classical real states, not by what holds at all states whatsoever. Still, since classical tautologies may fail in DLR's subjective states, their models violate the ``real states'' assumption, and so are not standard state space models.

But once we acknowledge that some states may not be intended to represent situations the analyst wishes to describe, a question arises: Why should one require DLR's three axioms on unawareness to be valid in \emph{all} states? DLR do not argue for this assumption. At best, their arguments in favor of their axioms only motivate imposing these axioms at real states. These arguments provide no motivation for imposing the axioms on subjective states that are not real, since these states are merely included in the model to specify the agent's knowledge and unawareness at real states.

There is a general methodological principle in epistemic modeling that \emph{axioms} are to be imposed at all states. But in the literature on awareness, following DLR, this methodological principle has long been abandoned. DLR's own non-standard models violate this requirement, as do the current leading proposals for representing awareness, for example that of \cite{heifetzetal2006iu}. The subjective states in DLR's models include states in which logical axioms, including the law of excluded middle, do not hold. In DLR's models, as in the ones we will propose, even logical axioms are allowed to fail at some states. The only difference between our proposal and theirs concerns which axioms are allowed to fail. DLR preserve their own axioms at all states, and move to non-standard models in which classical propositional logic may fail at subjective states. We will preserve classical propositional logic at all states, and work with standard models in which DLR's axioms may fail at subjective states.

Still, one might ask: How should we \emph{understand} a state where DLR's axioms are false? DLR interpret their own subjective states as ``descriptions of possibilities as perceived by the agent'' (p.~171). This interpretation does not seem appropriate for our models, in which DLR's axioms may fail at subjective states. But such metaphorical interpretations of these states are unnecessary. Subjective states where the axioms of awareness are invalid are simply to be understood in terms of the agent's knowledge and awareness at real states where the axioms are valid. 

In fact, we can give a direct argument for not imposing DLR's axioms at all states, and in particular, for including states where \KUI{} is invalid. First, by \AUI{}, if an agent is unaware of $p$, then she must be unaware of being unaware of $p$. But then, by Plausibility, the agent does not know that she does not know that she is unaware of $p$. (Essentially, DLR already give this argument on p.~169.) In epistemic models, we generally represent an agent's not knowing $q$ by including a state in which $q$ is false. So, to allow for real states in which the agent does not know that she does not know that she is unaware of $p$, we must include subjective states in which the agent knows that she is unaware of $p$, and so violates \KUI{}.
\footnote{This argument differs from DLR's main proof of triviality, since it only assumes that the axioms hold at the real state.}
 
To sum up: after rejecting standard state space models, DLR propose that we should use models in which the laws of logic fail at subjective states. They implement this proposal by countenancing states where propositional logic fails, so that their models are non-standard. But if we allow models in which the law of excluded middle may fail at subjective states, we must also consider models in which other axioms, including DLR's, may fail at subjective states. DLR's formal results only apply to standard state space models in which their axioms are imposed at all states; the results do not concern standard state space models in which the axioms are imposed only at real states. As a consequence, these formal results cannot provide a basis for the conclusion DLR draw: that standard state space models preclude unawareness.

\subsection{Non-Triviality}

We have not argued against the validity of DLR's three axioms in real states -- states representing the situations to be modeled. In our setting, we can formalize the distinction between real states and subjective states which are not real, by only assuming the validity of the axioms in some subset of the states in a model. We can then ask: is assuming the validity of the three axioms in such distinguished real states enough to lead to triviality? The following example shows that it is not:

\begin{thm}\label{thm-counterexample}
There is a model $M=\langle\Omega,k,a\rangle$, state $\alpha\in\Omega$ and event $E\subseteq\Omega$ such that Necessitation is valid on $M$, Plausibility, \AUI{} and \KUI{} are valid in $\alpha$, and $\alpha\notin a(E)$.
\end{thm}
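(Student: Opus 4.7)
The plan is to exhibit a tiny explicit counterexample. Requiring Plausibility, \KUI{}, and \AUI{} only at a designated state $\alpha$ rather than globally dramatically loosens the DLR constraints, so a two-state model should give enough room to work with. I would take $\Omega = \{\alpha, \beta\}$ and the witnessing event $E = \{\alpha\}$. To keep the awareness axioms trivial almost everywhere, I make $\alpha$ aware of every event except $E$ itself: set $a(\{\alpha\}) = \{\beta\}$ and $a(F) = \Omega$ for $F \in \{\emptyset, \{\beta\}, \Omega\}$. This immediately ensures $\alpha \notin a(E)$, and it also disposes of \KUI{} and \AUI{} at $\alpha$ for every $F \neq \{\alpha\}$: in either case $-a(F) = \emptyset$ (so \KUI{} just demands $\alpha \notin k(\emptyset)$) or the antecedent of \AUI{} fails outright.

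For knowledge, Necessitation forces $k(\Omega) = \Omega$; I would then set $k(\{\alpha\}) = \{\beta\}$ and $k(\emptyset) = k(\{\beta\}) = \emptyset$. The main obstacle is the Plausibility constraint at $\alpha$ for $F = \{\alpha\}$, which demands both $\alpha \notin k(\{\alpha\})$ and $\alpha \notin k(-k(\{\alpha\}))$. Since Necessitation puts $\alpha \in k(\Omega)$, we must avoid $-k(\{\alpha\}) = \Omega$, so $k(\{\alpha\})$ cannot be $\emptyset$; combined with $\alpha \notin k(\{\alpha\})$, this essentially pins down $k(\{\alpha\}) = \{\beta\}$. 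One then observes that $k(-k(\{\alpha\})) = k(\{\alpha\}) = \{\beta\}$ indeed excludes $\alpha$, and the Plausibility--Necessitation tension that drove DLR's global triviality is resolved when we only need the axiom at one state.

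The remainder of the proof is routine verification by inspection. Necessitation holds at every state because $k(\Omega) = \Omega$. Plausibility and \AUI{} at $\alpha$ have nontrivial content only for $F = \{\alpha\}$, which was handled in the previous paragraph. \KUI{} at $\alpha$ reduces to checking $\alpha \notin k(\emptyset)$ (three cases) and $\alpha \notin k(\{\alpha\}) = \{\beta\}$ (one case), both true by construction. Finally, $\alpha \notin a(E) = \{\beta\}$ by definition, so the triple $(M, \alpha, E)$ witnesses the theorem.
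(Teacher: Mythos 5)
Your construction is correct: the only event $F$ with $\alpha\notin a(F)$ is $\{\alpha\}$, and your choices $k(\{\alpha\})=\{\beta\}$, $k(\emptyset)=k(\{\beta\})=\emptyset$, $k(\Omega)=\Omega$ make Plausibility, \AUI{} and \KUI{} hold at $\alpha$ while Necessitation holds globally and $\alpha\notin a(E)$, so the triple is a genuine witness. The strategy is the same as the paper's (exhibit a small explicit countermodel with the DLR axioms imposed only at the distinguished state), but the witness differs: the paper uses a three-state model whose knowledge operator is induced by a reflexive, transitive accessibility relation, whereas you specify $k$ directly on a two-state space, and your $k$ violates Reflexivity at $\beta$ (indeed $\beta\in k(\{\alpha\})$ though $\beta\notin\{\alpha\}$). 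Your model is more minimal, and in fact it also happens to validate Monotonicity and Weak Necessitation, so it would serve the paper's follow-up remarks about extensions of DLR's Theorems 1(ii) and 2 as well. What the paper's relational construction additionally buys is that the same model validates Reflexivity, Positive Introspection, Distribution and Anti-Necessitation, and (as noted in its footnote) $n$-Plausibility for every $n$, which the authors exploit when arguing that strengthening the knowledge axioms cannot restore triviality; your two-state model, lacking Reflexivity, could not be reused for that purpose without modification, though it fully suffices for the theorem as stated.
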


\begin{proof}
Let $\Omega=\{\alpha,\omega_1,\omega_2\}$. Define a binary (accessibility) relation $R$ on $\Omega$ as follows:
\begin{center}

\addvspace{2em}

$\xymatrix@C=3.5mm{&\alpha\ar@(ul,ur)\ar[dl]\ar[dr]&\\\omega_1\ar@(dl,ul)&&\omega_2\ar@(dr,ur)}$

\addvspace{1em}

\end{center}
$R$ induces a possibility correspondence $P$ such that $P(\sigma)=\{\tau:R\sigma\tau\}$. With $P$, define $k$ and $a$ such that for all $F\subseteq\Omega$:
\begin{itemize}
\item[] $k(F)=\{\sigma\in\Omega: P(\sigma)\subseteq F\}$
\item[] $a(F)=\begin{cases}
\{\omega_2\} & \text{if } \omega_1\in F\text{ and } \omega_2\notin F\\
\Omega & \text{otherwise}
\end{cases}$
\end{itemize}
It is routine to check that $M=\langle\Omega,k,a\rangle$, $\alpha$ and $E=\{\alpha,\omega_1\}$ witness the claim to be proven.\footnote{As suggested in \cite{modicarustichini1999uapis}, one might consider an extension of Plausibility along the following lines: For each natural number $n$, let $n$-Plausibility be $Up\to(\neg K)^np$, where $(\neg K)^n\varphi$ is defined inductively by the two clauses $(\neg K)^1\varphi=\neg K\varphi$ and $(\neg K)^{n+1}\varphi=\neg K(\neg K)^n\varphi$. For each natural number $n$, $n$-Plausibility is valid in $\alpha$.}
\end{proof}

This shows that DLR's Theorem~1(i) cannot be extended to standard state space models in which DLR's three axioms are only required to be valid in real states. In fact, the model used in the above proof of Theorem~\ref{thm-counterexample} can also be used to show that DLR's other two results cannot be extended either. For 1(ii), note that Monotonicity is valid on $M$, and that $\alpha\in k(\Omega)$ although $\alpha\notin a(E)$. For 2, note first that Weak Necessitation is valid on $M$, and that $\alpha\in a(\Omega)$ but as before $\alpha\in k(\Omega)$ and $\alpha\notin a(E)$. More generally, any state in any model which satisfies both Necessitation and Monotonicity, in addition to DLR's three axioms, will be a counterexample not just to extensions of DLR's Theorem~1(i), but also to extensions of their Theorems 1(ii) and 2.

We conclude that none of DLR's three triviality results show that standard state space models preclude unawareness. One might wonder whether plausible strengthenings of the axioms on knowledge and unawareness allow us to reinstate the triviality results. In the full paper, we argue first that this cannot be achieved by strengthening their axioms governing knowledge, and, second, that it cannot be achieved by a particular strengthening of the axioms governing unawareness. The theorems and proofs may be found in Appendix~\ref{supporting-section-2}.

\section{Three Kinds of Awareness}

\subsection{A New Triviality Result}

DLR's result had limited implications for state space models because it depended on the validity of their axioms at all states. Is there a triviality result which only uses the validity of axioms on awareness in real states, rather than their validity in all states? In fact, as we now show, widely accepted axioms on awareness \emph{do} lead to triviality even if they are imposed only at real states. The result uses the following two axioms:
\begin{itemize}
\item[] AS: $A\neg p\to Ap$
\item[] AC: $A(p\wedge q)\to(Ap\wedge Aq)$
\end{itemize}
Awareness is widely assumed to satisfy both of these axioms; see, e.g., \cite[pp.~274--275]{modicarustichini1999uapis}, \cite[p.~331]{halpern2001asfu} (axioms A1 and A2) and \cite[p.~309]{heifetzetal2008acmfiu} (axioms 1 and 2).

As the next theorem shows, these axioms lead straightforwardly to triviality.

\begin{thm}
Let $M=\langle\Omega,k,a\rangle$ be a model and $\alpha\in\Omega$ such that AS and AC are valid in $\alpha$. Then $Ap\to Aq$ is valid in $\alpha$.
\end{thm}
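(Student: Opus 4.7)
My plan is to unpack the semantic content of the two axioms at $\alpha$ and then chain three applications together. Since AS and AC are valid in $\alpha$ for every valuation $v$, and since a valuation can assign any event to any proposition letter, these axioms translate into the following conditions on $a$:
\begin{itemize}
\item[] (AC$^*$) For all events $E_1,E_2\subseteq\Omega$, if $\alpha\in a(E_1\cap E_2)$, then $\alpha\in a(E_1)$ and $\alpha\in a(E_2)$.
\item[] (AS$^*$) For all events $E\subseteq\Omega$, if $\alpha\in a(-E)$, then $\alpha\in a(E)$; applying this with $-E$ in place of $E$ also yields the converse, so $\alpha\in a(E)\Leftrightarrow\alpha\in a(-E)$.
\end{itemize}
Likewise, $Ap\to Aq$ is valid in $\alpha$ iff for all events $E,F$, $\alpha\in a(E)$ implies $\alpha\in a(F)$; this is what I need to establish.

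The core chain has three steps. First, fix any event $E$ with $\alpha\in a(E)$. Applying (AC$^*$) with $E_1=E$ and $E_2=\Omega$, so that $E_1\cap E_2=E$, I conclude that $\alpha\in a(\Omega)$. Second, applying (AS$^*$) with $E=\emptyset$ (so that $-E=\Omega$), I get $\alpha\in a(\emptyset)$. Third, given any target event $F$, applying (AC$^*$) with $E_1=F$ and $E_2=-F$ (so that $E_1\cap E_2=\emptyset$) and using $\alpha\in a(\emptyset)$ yields $\alpha\in a(F)$, as required.

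There is no real obstacle here; the only point that deserves care is the translation step, where one has to note that the validity of an axiom in $\alpha$ quantifies over all valuations and thus, by event-sufficiency together with the freedom in choosing $v(p),v(q)$, is equivalent to the universal statement over arbitrary events substituted for the proposition letters. Once this is in place, the conclusion follows from just three invocations of the axioms with the constants $\Omega$ and $\emptyset$ playing their familiar roles of top and bottom in the event algebra.
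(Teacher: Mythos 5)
Your proof is correct and follows essentially the same route as the paper's: from $\alpha\in a(E)=a(E\cap\Omega)$ and AC obtain $\alpha\in a(\Omega)$, then $\alpha\in a(\emptyset)$ by AS, then $\alpha\in a(F)$ by AC applied to a decomposition of $\emptyset$ (you use $F\cap -F$ where the paper uses $\emptyset\cap F$, an immaterial difference). Your explicit remark that validity in $\alpha$ quantifies over all valuations, and hence over arbitrary events substituted for $p,q$, is exactly the translation the paper leaves implicit.
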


\begin{proof}
Consider any events $E$ and $F$, and assume $\alpha\in a(E)$. Since $E=E\cap\Omega$, $\alpha\in a(E\cap\Omega)$, and so by AC, $\alpha\in a(\Omega)$. By AS, $\alpha\in a(\emptyset)$. Since $\emptyset=\emptyset\cap F$, $\alpha\in a(\emptyset\cap F)$, and so by AC, $\alpha\in a(F)$.
\end{proof}

The crucial difference between this and DLR's triviality result is that AS and AC are only assumed to be valid in a distinguished state, for which it is shown that non-trivial unawareness in it is ruled out. 

But does awareness really satisfy both AS and AC? In the following, we will focus in particular on AC, arguing that for some important notions of unawareness, being aware of a conjunction does not entail being aware of its conjuncts.

\subsection{Attending vs. Conceiving vs. Processing}

In the literature on awareness, it is uncontentious that there is no single attitude of awareness; what is expressed by ``aware'' is a loose cluster of notions. This was noted at the very start of the literature, as witnessed by the lengthy discussions in \cite{faginhalpern1988baalr}; another detailed discussion can be found in \cite{schipperforthcominga}. We will argue that that at least some important notions of awareness do not satisfy AC (for others, as we will see, the situation is more complex).. In order to do so, we roughly distinguish the following three ways of understanding a claim of the form ``The agent is aware of \dots{}'':
\begin{itemize}
\item[(i)] The agent is attending to \dots{}
\item[(ii)] The agent has the conceptual resources required to conceive of \dots{}
\item[(iii)] The agent is able to process \dots{}
\end{itemize}
We will introduce these notions -- and distinguish between them -- using various examples found in the literature.

Consider first attention. An influential example, which first appeared in \cite{geanakoplos1989gtwpaatsac}, and which is discussed at length by DLR and numerous places in the subsequent literature, is based on the following quote from one of Arthur Conan Doyle's Sherlock Holmes stories \cite{doyle1901taosb}:

\begin{quote}
``{`Is there any other point to which you would wish to draw my attention?'\\
`To the incident of the dog in the night-time.'\\
`The dog did nothing in the night-time.'\\
`That was the curious incident' remarked Sherlock Holmes.}''
\end{quote}

Holmes's interlocutor is Inspector Gregory, a Scotland Yard detective. Before Holmes pointed out to Gregory that the dog did nothing in the night-time, Gregory was \emph{unaware} of the dog doing nothing in the night-time. Gregory's state of unawareness is naturally understood as one of \emph{inattention} -- Holmes makes Gregory aware of the dog doing nothing in the night time in the sense of bringing this fact to his attention.

Gregory's failing to attend to the dog doing nothing in the night-time must be sharply distinguished from Gregory's not being able to conceive of the the dog doing nothing in the night-time. Before Holmes alerted Gregory to the dog doing nothing in the night-time, Gregory possessed the concepts required to entertain thoughts about the dog doing nothing in the night time. Contrast this with the following example for unawareness from \cite[p.~40]{faginhalpern1988baalr}:
\begin{quote}
``How can someone say that he knows or doesn't know about $p$ if $p$ is a concept he is completely unaware of? One can imagine the puzzled frown of a Bantu tribesman's face when asked if he knows that personal computer prices are going down!''
\end{quote}
The relevant state of unawareness in this example is not merely a matter of the agent failing to attend to the relevant event or subject matter. For example, if one is unaware in the sense of being unable to conceive of an event, it must be that one does not understand the words for those notions in any language. Contrast this with the case of Inspector Gregory. Gregory understands what Holmes says: he can conceive of the dog's doing nothing. But the purported example of inconceivability does not have this structure: the tribesman is supposed to be unable to think about computers using any of his conceptual resources, no matter what he attends to. The two notions of awareness -- attending to versus being able to conceive of -- are therefore clearly distinct.%

The third notion of unawareness we want to single out is one which \cite{faginhalpern1988baalr} (see also \cite{halpern1994algorithmic} and \cite{halpernpucella2011dwlo}) focus on; it can be understood as an attempt to deal with what is known as the ``problem of logical omniscience'' in epistemic logic. In standard state spaces, if two sentences $\varphi$ and $\psi$ are equivalent in classical propositional logic, then $K\varphi$ and $K\psi$ will be true in the same states. In particular, if $K(p\vee\neg p)$ is true in a given state, then so is $K\tau$ for any propositional tautology $\tau$.
One of Fagin and Halpern's reasons for developing a logic of awareness is to obtain logics which do not have this property. They write:
\begin{quote}
``The notion of awareness we use in this approach is open to a number of interpretations. One of them is that an agent is aware of a formula if he can compute whether or not it is true in a given situation within a certain time or space bound. This interpretation of awareness gives us a way of capturing resource-bounded reasoning in our model.''
\end{quote}

Being unaware of $\varphi$ in the sense of not being able to process $\varphi$ is clearly distinct from failing to attend to $\varphi$: although Gregory did not attend to the dog doing nothing in the night-time, he had no difficulties processing the claim that the dog did nothing in the night-time. Not being able to process $\varphi$ is also clearly distinct from not being able to conceive of $\varphi$: Gregory might not have been able to process an extremely complicated propositional tautology using only negation, conjunction and the sentence ``the dog did nothing in the night-time'', but he clearly possessed all the concepts required to entertain it.

\subsection{Awareness of Conjunctions}

Let's return to the new triviality result introduced at the start of this section. As already advertised, we believe that the principle AC, which says that an agent who is aware of a conjunction is aware of its conjuncts, may be plausible for one notion of awareness, but it is not for the other two. 

Consider first awareness as the ability to process. This is plausibly a relation of agents to \emph{sentences}, as part of what it takes to process \dots{} is to be able to find out what the sentence ``\dots{}'' means. AC may hold if awareness is understood as the ability to process: It is natural to assume that an agent who is able to process a conjunction ``... and ---'' is also able to process ``...'' and ``---''. As noted already in \cite[p.~54]{faginhalpern1988baalr}, even this may fail: an agent might be able to recognize that a very long sentence has the form $\varphi \wedge \neg \varphi$, and so be able to process it, although she is unable to process the complex $\varphi$ on its own. Resolving this controversy may require distinguishing further among different notions of processing, and the appropriate resolution may depend on the intended application.

The relations of attention and conceivability are different from the ability to process. In particular, they are plausibly relations agents have not to sentences but to what sentences express. We might call these entities \emph{contents} or \emph{propositions}, but in keeping with the terminology employed above, we call them \emph{events}. In the full paper, we discuss the difference between sentences and events in more detail, and motivate the assumption of a \emph{coarse-grained theory of events}, according to which events form a complete atomic Boolean algebra. A consequence of this assumption -- which is implicit in all standard state-space based modeling techniques -- is that sentences which are equivalent in propositional logic have identical contents. 

With this understanding of attention and conceivability as relations of agents to coarse-grained events, consider first attention. Assume that after the conversation with Holmes quoted above, Gregory is alone thinking about the case, and attending to the event of the dog barking in the night ($p$). He is not, however, attending to event of Holmes at that moment smoking a pipe ($q$). It is then natural to say that Gregory is also not attending to the conditional event that \emph{if} Holmes is currently smoking a pipe \emph{then} the dog barked in the night ($q \to p$). But notice that according to the coarse-grained Boolean theory of events, the event that the dog barked in the night ($p$) is identical to the event that if Holmes is smoking a pipe then the dog barked in the night, and the dog barked in the night ($(q \to p )\land p$). So if AC were valid, then since Gregory is attending to the event of the dog barking in the night, he would be attending to the event that if Holmes is smoking a pipe then the dog barked in the night. But by assumption Gregory is not attending to this last event. Thus it follows from the coarse-grained theory of events that AC must be rejected.

A similar example can be given if we understand awareness as conceivability. Assume our agent does not have the conceptual resources to entertain the event of there being a black hole. According to the assumed coarse-grained theory of content, the event of there being a black hole and there being no black hole is identical to any event expressed by a propositional contradiction, such as the event of there being a sheep and there being no sheep. The agent might well have the conceptual resources to entertain the event of there being a sheep and there being no sheep, without having the conceptual resources to entertain the event of there being a black hole.

If we adopt, as usual, a theory of events which identifies the event expressed by sentences which are equivalent in propositional logic, AC appears to be inappropriate. Thus the new triviality result with which we started this section also does not establish that standard state space models preclude unawareness understood as inattention or inability to conceive.

\section{Partitional Models}

So far, we have shown that standard state-space models escape certain putative impossibility results for models of attention and conceivability. But this does not establish that standard space models can provide fertile models of these notions. In the remainder of the paper, we define, motivate, and examine a class of standard state space models for representing attention and the ability to conceive. 

To show that our models generalize smoothly to the multi-agent case, from now on we use a language $L_I$ parametrized to an arbitrary set of agent-indices $I$ which is defined as the language $L$ above, except that the operators $A_i$ and $K_i$ are indexed to $i\in I$. Models are consequently tuples of the form $\langle\Omega,k^i,a^i\rangle_{i\in I}$.

The models we will be working with are defined as follows:

\begin{definition}
$\langle\Omega,R^i,\approx^i\rangle_{i\in I}$ is a \emph{partitional model} if $\Omega$ is a set and for each $i\in I$, $R^i$ is a binary relation on $\Omega$ which is reflexive and transitive, and $\approx^i$ a function which maps each $\omega\in\Omega$ to an equivalence relation $\approx^i_\omega$ on $\Omega$.
\end{definition}

Here and in what follows, we make use of the fact that each equivalence relation corresponds to a unique partition, and \emph{vice versa}; accordingly, we treat them as interchangeable.

Partitional models can be used to generate standard models in the following way: $R^i$ specifies states of knowledge just as in Theorem~\ref{thm-counterexample}. The idea behind $\approx^i$ is that the events the agent is aware of at $\omega$ are the events which are unions of sets of equivalence classes of $\approx^i_\omega$ (equivalently: unions of sets of cells of the induced partition). So for each $i\in I$, let $R^i$ and $\approx^i$ determine functions $k^i$ and $a^i$ on events on $\Omega$ as follows:
\begin{description}
\item[$k^i(E)=$] $\{\sigma\in\Omega: P^i(\sigma)\subseteq E\}$, where $P^i(\sigma)=\{\tau:R^i\sigma\tau\}$
\item[$a^i(E)=$] $\{\sigma\in\Omega:$ for all $\rho$ and $\tau$ such that $\rho\approx^i_\sigma\tau$, $\rho\in E$ iff $\tau\in E\}$
\end{description}
Let the standard model determined by a partitional model $\langle\Omega,R^i,\approx^i\rangle_{i\in I}$ be $\langle\Omega,k^i,a^i\rangle_{i\in I}$, with $k^i$ and $a^i$ as just defined. On such a standard model, $L_I$ can be interpreted as above; obviously, this induces a way of interpreting $L_I$ directly on partitional models.

\subsection{The Attitude of Attention}\label{sect-att-qda}

In order to motivate partitional models as models of limited attention, we suggest that attention in the sense we have been using the term should primarily be understood as an attitude towards questions. There are many available formal approaches to modeling questions (for an overview, see \cite{krifka2011questions}). For concreteness, we'll adopt a standard approach, representing questions as partitions of the state space (see  \cite{groenendijkstokhof1984sotsoqatpoa}, building on \cite{hamblin1973qime} and \cite{karttunen1977sasoq}). Although we think the attitude to questions is primary, we will follow the literature on awareness, in axiomatizing a notion of attention which has events as its objects.  The relationship between this attitude to events and the attitude toward questions will be as follows: an agent attends to the question $Q$ if and only if the agent attends to every partial answer to $Q$. Using partitions to model questions, partial answers are unions of sets of cells, corresponding to how standard models are derived from partitional models.\footnote{Section \ref{decisiontheory} discusses related models developed in \cite{kets2014bounded,kets2014finite}.}

\subsection{Partitions for Conceivability}

To motivate the use of partitional models of conceivability, assume that the agents to be modeled have the concept of negation and (infinitary) conjunction, so that the set of events they can conceive of are closed under complement and arbitrary intersection. This is mathematically equivalent to requiring that this set is derived from an equivalence relation as above.

\subsection{An Example}\label{sect-partitional-model-example}

It will be useful to have a concrete partitional model before us, as a running example. The following model shows that there are non-trivial partitional models; for simplicity, a single-agent case is specified. Let $M=\langle\Omega,R,\approx\rangle$, with $\omega R\nu$ iff $\omega=1$ or $\omega=\nu$, and $\approx$ given by the following equivalence classes:

\

\begin{itemize}
\item[$\approx_1:$] $\{1\},\{2,3,4\}$
\item[$\approx_2:$] $\{1\},\{2\},\{3,4\}$
\item[$\approx_3:$] $\{1\},\{3\},\{2,4\}$
\item[$\approx_4:$] $\{1\},\{4\},\{2,3\}$
\end{itemize}
Thus, at 1, the strongest event known by the agent is $\Omega$, and at each other state $n$, it is $\{n\}$. At each state $n$, the events the agent is aware of are the events which don't distinguish between any states in $\Omega\backslash\{1,n\}$.

Drawing the four states in a circle, starting with 1 at the top and going clockwise, we can draw each equivalence relation in a similar smaller circle, connecting two states by a sequence of lines if they are related by the relevant equivalence relation:
\begin{center}
\begin{e-p-sys}{4}
\draw [line] (1-2) -- (1-3) -- (1-4);
\draw [line] (2-3) -- (2-4);
\draw [line] (3-2) -- (3-4);
\draw [line] (4-2) -- (4-3);
\end{e-p-sys}
\end{center}
This is a partitional model in which there is non-trivial unawareness at each state. We will appeal to it below in order to show the consistency of various constraints.

\subsection{Axioms}

Given a class of models $\mathtt C$, a set of sentences $\Sigma \subseteq L_I$ is the logic of $\mathtt C$ if and only if $\Sigma$ contains exactly those sentences which are valid on $\mathtt C$. Characterizing the logic of a class of models gives us a formal perspective from which to assess what assumptions our models encode about agents knowledge and awareness.

Thus we may ask: What is the logic of partitional models? Standard techniques on completeness results in modal logic are easily adapted to obtain the following result.

\begin{thm}
A formula is valid on all partitional models if and only if it is derivable in the calculus with the following axiom schemas and rules:
\begin{itemize}
\item[] PL: Any substitution instance of a theorem of propositional logic.
\item[] K-K: $K_i (\varphi \to \psi) \to (K_i \varphi \to K_i \psi)$
\item[] K-T: $K_i \varphi \to \varphi$
\item[] K-4: $K_i \varphi \to K_i K_i \varphi$
\item[] A-Neg: $A_i \varphi \to A_i\neg \varphi$
\item[] A-M: $(A_i \varphi \wedge A_i \psi)\to A_i( \varphi \wedge \psi)$
\item[] A-N: $A_i \top$
\item[] MP: From $\vdash \varphi$ and $\vdash \varphi\to\psi$ infer $\vdash \psi$
\item[] K-RN: From $\vdash \varphi$ infer $\vdash K_i \varphi$
\item[] A-RE: From $\vdash \varphi \leftrightarrow \psi$ infer $\vdash A_i \varphi \leftrightarrow A_i \psi$
\end{itemize}
Moreover, the logic is decidable.
\end{thm}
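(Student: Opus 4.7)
Soundness is a mechanical check against the partitional semantics: K-K, K-T, and K-4 capture the standard S4 correspondence with reflexive--transitive $R^i$, and A-RE is immediate from event-sufficiency. For the remaining awareness axioms, note that at each state $\sigma$ the family $\{E : \sigma \in a^i(E)\}$ consists of all unions of $\approx^i_\sigma$-cells, and is therefore a Boolean subalgebra of $2^\Omega$ containing $\Omega$; this yields A-N, A-Neg and A-M. The rules MP, K-RN and A-RE preserve validity by direct inspection.

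For completeness and decidability together I would build a finite filtered canonical model. Given a non-theorem $\varphi_0$, let $\Phi$ be its (finite) set of subformulas and extend $\{\neg\varphi_0\}$ to a maximal consistent set (MCS) $\Gamma_0$. Take as states the $\Phi$-types realised by MCSs. Define $R^i$ by the standard S4 filtration, so that reflexivity and transitivity follow from K-T and K-4, and define $\approx^i_{[\Gamma]}$ by $[\Delta]\approx^i_{[\Gamma]}[\Delta']$ iff $\Delta$ and $\Delta'$ contain the same members of $T_\Gamma := \{\psi \in \Phi : A_i\psi \in \Gamma\}$; this is obviously an equivalence relation. One then proves the usual truth lemma for every $\varphi \in \Phi$ by induction on complexity, the propositional and $K_i$ clauses being routine.

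The main obstacle is the $A_i\varphi$ clause. The direction ``$A_i\varphi \in \Gamma$ implies $[\Gamma] \vDash A_i\varphi$'' is easy: $\varphi$ lies in $T_\Gamma$, so any two $\approx^i_{[\Gamma]}$-related MCSs agree on $\varphi$ by definition, and the induction hypothesis closes the case. The converse requires a Lindenbaum-algebra argument. Suppose every pair of MCSs agreeing on $T_\Gamma$ agrees on $\varphi$. Since $T_\Gamma$ is finite, enumerate the consistent conjunctions of $T_\Gamma$-literals $\alpha_f$ and let $\chi$ be the disjunction of those $\alpha_f$ for which $\vdash \alpha_f \to \varphi$; a standard Lindenbaum argument (any MCS lies in exactly one $\alpha_f$, and $\alpha_f \wedge \neg\varphi$ is inconsistent whenever $\alpha_f$ is consistent with $\varphi$) delivers $\vdash \varphi \leftrightarrow \chi$. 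Because the set $\{\psi : A_i\psi \in \Gamma\}$ is closed under Boolean combinations by A-N, A-Neg and A-M and contains $T_\Gamma$, we obtain $A_i\chi \in \Gamma$; applying A-RE to $\vdash \varphi \leftrightarrow \chi$ then yields $A_i\varphi \in \Gamma$. The resulting model is finite and partitional and refutes $\varphi_0$ at $[\Gamma_0]$, giving both completeness and the finite model property; decidability then follows from the standard fact that any recursively axiomatized logic with the finite model property is decidable.
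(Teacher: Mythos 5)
Your proof is correct, and it follows the same overall strategy as the paper's (a finite canonical/filtration-style countermodel built from maximal consistent sets, giving completeness, the finite model property, and hence decidability), but it executes the awareness part with different machinery. The paper treats $A_i$ via neighborhood (Scott--Montague) semantics: it invokes the general canonical-model theory for classical modal logics (Segerberg, Chellas), filters through the subformulas closed under Boolean combinations, observes that the resulting neighborhood of each state is a field of sets, and then uses the fact that a finite field of sets is exactly the collection of unions of cells of a partition to read off the required equivalence relations. You bypass neighborhoods entirely: you define $\approx^i_{[\Gamma]}$ syntactically as agreement on $T_\Gamma=\{\psi\in\Phi: A_i\psi\in\Gamma\}$ and prove the hard direction of the $A_i$ clause by a disjunctive-normal-form argument over $T_\Gamma$-literals, using A-N, A-Neg, A-M for Boolean closure of the awareness set and A-RE to transfer along the provable equivalence $\vdash\varphi\leftrightarrow\chi$; this is in effect a syntactic re-proof of the paper's ``finite field of sets is generated by a partition'' step. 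What the paper's route buys is brevity and reuse of standard filtration lemmas (including preservation of transitivity and reflexivity for $K_i$, which you also handle correctly via the S4 filtration); what your route buys is a self-contained argument that makes visible exactly where each awareness axiom is used. One small point to tidy: since $A_i\psi$ need not lie in $\Phi$ for $\psi\in\Phi$, the set $T_\Gamma$ can depend on which MCS represents the $\Phi$-type $[\Gamma]$, so you should either fix a representative for each state or close $\Phi$ under one application of $A_i$; with that fixed-representative convention your truth lemma goes through unchanged, because for $A_i\varphi\in\Phi$ membership agrees across representatives.
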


A proof is given in Appendix B.

\subsection{DLR Once More}\label{DLR-once-more}

Consider again DLR's three axioms. Given our discussion above, it is natural to consider partitional models where DLR's axioms are required to be valid in some distinguished state:

\begin{definition}
$\langle \Omega, \alpha, R^i, \approx^i \rangle_{i\in I}$ is a \emph{partitional DLR mod\-el} if $\langle \Omega, R^i, \approx^i \rangle_{i\in I}$ is a partitional model and Plausibility, \KUI{} and \AUI{} (for each $i\in I$) are valid in $\alpha$.
\end{definition}

We now show that DLR's triviality result cannot be revived in partitional models:

\begin{thm} There is a partitional DLR model $\langle \Omega, \alpha, R^i,$ $\approx^i\rangle_{i\in I}$ and an event $E\subseteq \Omega$ such that $\alpha \in U(E)$. \end{thm}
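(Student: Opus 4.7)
The plan is to adapt the counterexample from Theorem~\ref{thm-counterexample} to the partitional setting. I would keep the same state space $\Omega=\{\alpha,\omega_1,\omega_2\}$ and accessibility relation $R$, which is already reflexive and transitive (so meets the partitional-model requirement), and design equivalence relations $\approx_\sigma$ at each state so that $\alpha$ is unaware of some event while Plausibility, \KUI{} and \AUI{} all hold at $\alpha$. Since $\alpha$ must fail to be aware of something, $\approx_\alpha$ must be non-discrete; the simplest workable choice is to give $\alpha$ the two-cell partition $\{\{\alpha\},\{\omega_1,\omega_2\}\}$, which makes $\alpha$ unaware of precisely the four events that separate $\omega_1$ from $\omega_2$.

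The hard step will be securing \AUI{} at $\alpha$: for every such $E$, we need $-a(E)$ to separate $\omega_1$ from $\omega_2$ as well, i.e., $a(E)$ must do so. The naive choice of the discrete partition at $\omega_1$ and $\omega_2$ fails badly here, because it forces $\omega_1,\omega_2\in a(E)$ for every $E$, so $a(E)$ never separates them. The fix is to break the $\omega_1/\omega_2$ symmetry by assigning $\approx_{\omega_1}$ the cells $\{\omega_1\}$ and $\{\alpha,\omega_2\}$, and $\approx_{\omega_2}$ the cells $\{\omega_2\}$ and $\{\alpha,\omega_1\}$. Under these partitions, for each of the four events $E$ that separates $\omega_1$ from $\omega_2$, exactly one of $\omega_1,\omega_2$ recognises $E$, so $a(E)$ equals $\{\omega_1\}$ or $\{\omega_2\}$ and therefore itself separates $\omega_1$ from $\omega_2$, giving \AUI{} at $\alpha$.

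The remaining axioms then follow almost for free from the structure of $R$: since $P(\alpha)=\Omega$, we have $\alpha\in k(F)$ iff $F=\Omega$. A short finite case check shows $a(E)\neq\emptyset$ and $k(E)\neq\emptyset$ for every relevant $E$, hence $-a(E),-k(E)\neq\Omega$, which immediately yields \KUI{} ($\alpha\notin k(-a(E))$) and the nontrivial half of Plausibility ($\alpha\notin k(-k(E))$); the easy half, $\alpha\notin k(E)$ whenever $\alpha$ is unaware of $E$, is immediate because such an $E$ is never $\Omega$. Taking $E=\{\omega_1\}$ then gives $\alpha\in U(E)$, and the proof reduces to the routine finite verification of the three axioms on the eight events of $\Omega$, which I would not write out in full.
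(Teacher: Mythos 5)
Your proof is correct and takes essentially the same approach as the paper's: both arguments work by exhibiting a concrete finite partitional model and checking Plausibility, \KUI{} and \AUI{} at a distinguished state that is unaware of some event. The paper simply distinguishes state 1 of its four-state running example, whereas you construct an equally valid three-state witness by re-partitioning the model from Theorem 2; your verification (in particular the choice of $\approx_{\omega_1}$, $\approx_{\omega_2}$ to ensure $a(E)\in\{\{\omega_1\},\{\omega_2\}\}$ for the four separating events, which is what secures \AUI{}) goes through.
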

 
 \begin{proof} Simply distinguish state 1 in the model presented in section~\ref{sect-partitional-model-example}. \end{proof}
 
We conjecture that the logic of partitional DLR models can be axiomatized as follows:

\begin{cnj}
Add the following axioms to the theorems of the axiom system in Theorem~4 and close under modus ponens:
\begin{itemize}
\item[] P: $U_i\varphi\to(\neg K_i \varphi \wedge\neg K_i\neg K_i\varphi)$
\item[] AU: $U_i\varphi\to U_i U_i\varphi$
\end{itemize}
A formula is derivable in this calculus if and only if it is valid in every distinguished state of every partitional DLR model.
\end{cnj}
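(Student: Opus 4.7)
The conjecture asserts both soundness and completeness for the extended calculus with respect to distinguished-state validity in partitional DLR models. My plan is to dispatch soundness by inspection and to tackle completeness by adapting the canonical-model construction that underlies Theorem~4.

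Soundness is essentially immediate. The theorems of the base axiom system are, by Theorem~4, valid on every partitional model, hence at the distinguished state of any partitional DLR model. The schemas P and AU are valid at the distinguished state by the very definition of a partitional DLR model, and modus ponens visibly preserves truth at any fixed state.

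For completeness, suppose $\varphi$ is not derivable in the extended calculus. Then $\{\neg\varphi\}$ is extended-consistent, and extends via the usual Lindenbaum argument to a maximally extended-consistent set $\Gamma_0$. Since extended-consistency implies base-consistency, $\Gamma_0$ is also a maximally base-consistent set, and so may be taken as a state of the canonical partitional model $M^c$ produced in the proof of Theorem~4. I would designate $\Gamma_0$ as the distinguished state. Theorem~4's truth lemma, applied to the canonical valuation $v^*$, gives $\Gamma_0 \nvDash_{v^*} \varphi$, and it gives $\Gamma_0 \vDash_{v^*} \psi$ for every substitution instance $\psi$ of P or AU, since every such instance lies in $\Gamma_0$.

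The main obstacle will be promoting these facts from validity under $v^*$ to validity under every valuation, as the definition of a partitional DLR model requires; this is nontrivial because arbitrary events in $M^c$ need not be formula-definable. I would attack this by extracting the structural consequences of P and AU at $\Gamma_0$ and verifying them in $M^c$. The theorem $K_i\varphi\to A_i\varphi$, derivable from P by contraposition, combines with the truth lemma to force every $\approx^i_{\Gamma_0}$-class to lie wholly inside $P^i(\Gamma_0)$ or wholly outside it, and the contrapositive $A_iU_i\varphi\to A_i\varphi$ of AU similarly constrains how $\approx^i_{\Gamma_0}$ interacts with its own iteration. The delicate step is to catalogue the semantic correspondents of P and AU exhaustively and to show that $M^c$, possibly after a mild refinement such as collapsing $\approx^i_{\Gamma_0}$ to the identity outside $P^i(\Gamma_0)$ without disturbing the truth lemma, realizes them all; this is where any ingredient beyond the Theorem~4 construction will be required.
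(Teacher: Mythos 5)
First, note that the paper does not prove this statement: it is explicitly left as a conjecture, so there is no ``paper proof'' to compare against, and the burden on you is to actually close the argument, not to outline it. Your soundness half is fine (the base theorems are valid on all partitional models by Theorem~4, instances of P and AU are valid at the distinguished state by the definition of a partitional DLR model since validity in $\alpha$ quantifies over all valuations and hence over all events, and modus ponens preserves truth at a fixed state). But the completeness half is not a proof: you correctly identify the decisive obstacle --- that the truth lemma only yields truth of the \emph{formula} instances of P and AU at $\Gamma_0$ under the canonical valuation, whereas a partitional DLR model requires Plausibility, \KUI{} and \AUI{} to hold at $\alpha$ for \emph{all events} --- and then you leave exactly that step unresolved (``this is where any ingredient beyond the Theorem~4 construction will be required''). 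That step is the entire open content of the conjecture.

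Concretely, the gaps are these. (1) Your catalogue of ``semantic correspondents'' is not right even where you state it: validity of $K_i\varphi\to A_i\varphi$ at $\alpha$ over all events does not just force each $\approx^i_\alpha$-class to lie inside or outside $P^i(\alpha)$; it forces $P^i(\alpha)$ to be $\approx^i_\alpha$-closed \emph{and} every class outside $P^i(\alpha)$ to be a singleton (take $E=P^i(\alpha)\cup\{x\}$), and the $\neg K_i\neg K_i$ conjunct of P, \KUI{} and AU impose further quantifications over arbitrary events that you have not analyzed at all. (2) In the infinite canonical model not every event is formula-definable, so truth of all instances at $\Gamma_0$ does not yield validity at $\Gamma_0$; in the finite filtration every event is definable, but the filtration truth lemma covers only formulas in the filtration set $\Gamma$, which is not closed under $A_i$ and $K_i$, so membership of P- and AU-instances in $\Gamma_0$ does not transfer to the filtrated model either. (3) The proposed ``mild refinement'' (collapsing $\approx^i_{\Gamma_0}$ to the identity outside $P^i(\Gamma_0)$) changes $a^i$ at $\Gamma_0$ and hence the truth values of $A_i$-subformulas there, which can destroy the very truth lemma you need to keep $\neg\varphi$ true at the distinguished state; you give no argument that it does not. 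Until these model-surgery and correspondence issues are actually worked out (or a different construction is found), what you have is a restatement of the conjecture's difficulty, not a proof of it.
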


Note that $\neg KU\varphi$ can be derived using P, AU and K-T.

The present result shows that we can impose the DLR axioms without trivializing partitional models. But we confess to doubts about whether these axioms are appropriate. Just as with AC, once we understand more clearly the character of attention and conceivability, as well as the distinction between sentences and what they express, DLR's axioms become much less compelling. The clearest case concerns Plausibility and attention. A consequence of the contrapositive of Plausibility is $K \varphi \to A \varphi$. But this principle is false for attention. You know that there are more than four stars in the universe, but we doubt that you were attending to the question of how many stars there are prior to reading the previous clause. As we discuss in more detail in the full paper, the coarse-grained conception of content together with clarity about the notion of awareness to be modeled cast doubt on DLR's axioms.

\subsection{Propositional Quantification}

A challenge to some approaches to unawareness is to represent propositionally quantified statements. E.g., earlier models by Halpern made the claim that the agent knew she was unaware of something unsatisfiable (cf. \cite{halpernrego2009rakou} and \cite{halpernrego2013rakour}). In standard state space models such as ours, it is trivial to add propositional quantifiers without any such consequences. To do so, we write $v[E/p]$ for the valuation function which maps $p$ to $E$ and every other proposition letter $q$ to $v(q)$:\footnote{See already \cite{kripke1959actiml}, and \cite{fine1970pqiml} for a more systematic development. See \cite{fritzunpublishedlfpc} for results on the complexity of propositional quantifiers in the related setting of \cite{fritzunpublishedpc}.}
\begin{itemize}
\item[] $\llbracket\forall p\varphi\rrbracket_{M,v}=\bigcap_{E\subseteq\Omega}\llbracket\varphi\rrbracket_{M,v[E/p]}$
\end{itemize}
To illustrate that these quantifiers behave just as one would expect, note that in state 1 of the example described in section~\ref{sect-partitional-model-example}, the agent knows that she is unaware of something without there being something that she knows to be unaware of: $K\exists pUp\wedge\neg\exists pKUp$ is true in this state.

\subsection{Closure and Automorphisms}


In partitional models, what agents are aware of (attend to/can conceive) is closed under negation and conjunction. One might wonder whether we can also impose the constraints that what agents are aware of must be closed under awareness and knowledge. In other words, whether there are models on which the following axioms are valid:
\begin{itemize}
\item[A-4ij] $A_ip\to A_iA_j p$
\item[AK-4] $A_ip\to A_iK_j p$
\end{itemize}


To provide models which validate these principles we adapt the coherence constraint of \cite{fritzunpublishedpc}.\footnote{The following notion of coherence differs importantly from that of \cite{fritzunpublishedpc} in that ${\approx}^i_x$ here need not relate $x$ only to $x$.} The idea behind it is most easily described for awareness as conceivability, taking the equivalence relations of partitional models to represent a relation of indistinguishability using conceptual resources available to the relevant agent at the relevant state. Coherence requires that if two states are indistinguishable in this way, then there must be a way of permuting the state space in a way which preserves all structural facts about knowledge and awareness, as well as all the events which the relevant agent is aware of at the relevant state.

Let $M=\langle\Omega,R^i,\approx^i\rangle_{i\in I}$ be a partitional model. A permutation $f$ of $\Omega$ is an automorphism of $M$ if for all $i\in I$,
\begin{itemize}
\item[(i)] for all $x,y,z\in\Omega$, $y\approx^i_xz$ iff $f(y)\approx^i_{f(x)}f(z)$, and
\item[(ii)] for all $x,y\in\Omega$, $R^ixy$ iff $R^if(x)f(y)$.
\end{itemize}
A state $x\in\Omega$ \emph{coheres} if for all $i\in I$ and $y,z\in\Omega$ such that $y\approx^i_xz$ there is an automorphism $f$ of $M$ such that $f(y)=z$ and $f\subseteq{\approx^i_x}$ (i.e., $\omega\approx^i_x\omega$ for all $\omega\in\Omega$).
It's routine to verify that A-4ij and AK-4 are valid in any coherent state of a partitional model.

Once again, the model presented in section~\ref{sect-partitional-model-example} demonstrates the satisfiability of this constraint: every state in this model is coherent. Since the model also satisfies the DLR axioms at state $1$, it shows that even if we were to uphold the DLR axioms, imposing them together with coherence would not trivialize state space models of awareness.

\section{Decision Theory}\label{decisiontheory}

In section~\ref{DLR-once-more} the example of the number of stars illustrated how one may believe and know things to which one is not attending; clearly this kind of inattention may also affect choice-behavior. One advantage of standard state spaces is that we can use the usual decision-theoretic framework to represent the effects of inattention on choice-behavior.\footnote{We do not here attempt to back-form what the agent is aware of from her choice-dispositions, as \cite{morris1996tlobabc,morris1997adok} do for belief, and \cite{schipper2013adseu,schipper2014pbu} do for awareness.}

The usual decision theoretic representation of an agent's beliefs uses a measure-space $\langle S, \mathcal B, \mu \rangle$.\footnote{This can be derived in any of the standard ways: e.g. \cite{savage1954fos}, \cite{vonneumannmorgenstern1944tgeb}, \cite{anscombeaumann1963dosp}, \cite{bolker1967saouasp}, \cite{jeffrey1983lod}, \cite{broome1990bjeut}.}  To generate a partitional model, we enrich this description of the agent by selecting $\mathcal B^C$, a complete atomic subalgebra of $\mathcal B$, to generate a representation of what the agent attends to in the context of choice: $\langle S, \mathcal B, \mu, \mathcal B^C\rangle$. The atoms of $\mathcal B^C$ are a partition of $S$, so this structure gives rise to a partitional model of unawareness. The distribution the agent ``uses'' in a choice context is given by letting $\mu^C (E) = \mu(E)$ for all $E \in \mathcal B^C$ and undefined otherwise. The events the agent ``explicitly believes'' in the context can then be defined as the events of which the agent is certain in $\mu^C$. The algebra $\mathcal B^C$ can also be used to parametrize ``expanding'' and more generally ``changing'' awareness, represented as transitions between different complete atomic sub-algebras of $\mathcal B$.\footnote{See the full paper for an alternative related to that of  \cite{karniviero2013rb}.} Since different algebras will determine different \emph{explicit} beliefs in different contexts, this changing awareness can also represent effects of limited attention such as framing effects or failures of recall.

An approach along these lines has already proven fruitful in epistemic game theory. \cite{kets2014bounded} and \cite{kets2014finite} develop Harsanyi type spaces in which players' beliefs may be defined on different $\sigma$-algebras. If the algebras are taken as the events the agent is attending to, one may interpret these models as examples of agents who fail to attend to questions about the higher-order beliefs of others, and thus do not have \emph{explicit} beliefs over events which can be defined only by the level-$n$ beliefs of others for large enough $n$.

\subsection{Speculative Trade}

An important test of approaches to unawareness has been how they fare with speculative trade (\cite{heifetzetal2013ubast}). Building on the work of \cite{AumannAgreeing}, \cite{MilgromStokey} proved their famous ``no-trade'' theorem, illustrating the extreme strength of $S5$ knowledge together with a common prior.  One aim of representing ``bounded'' agents such as those with limited attention is to escape such paradoxes (for this perspective, see \cite{Morris1995}, \cite{ledermanpeople}). Accordingly, we now provide a partitional DLR model with a common prior in which speculative trade is possible.

As is well known (see \cite{geanakoplos1989gtwpaatsac}, \cite{samet1990ignoring}, \cite{rubinsteinwolinsky1990otloatdtr}), the ``no-trade'' theorem does not hold in general if agents' accessibility relations $R^i$ are merely transitive and reflexive, but are not required to form an equivalence relation. Plausibility is incompatible with the $R^i$ forming an equivalence relation (\cite{modicarustichini1994aapis}). Still, the DLR axioms together with partitional awareness models impose substantial further constraints, which might be thought to rule out speculative trade. We now construct a partitional DLR model to show that speculative trade can still occur in the presence of DLR's axioms and nontrivial unawareness.
%

Let the states be $W=\{1,2,3,4,5\}$ and the agents be Alice, $A$, and Bob $B$. The accessibility relations are defined so that: $1R^Ax$ iff $x\leqslant 3$; $5R^Ax$ iff $x \geqslant 3$ and otherwise $wR^Ax$ iff $w=x$, while $R^B=W \times W$. The partitions of the agents are induced by $\approx^A_1=\approx^A_2=\{\{1\}, \{2,3\}, \{4\}, \{5\}\}$; $\approx^A_4=\approx^A_5=\{\{1\}, \{2\},\{3, 4\},$ $ \{5\}\}$; and for all $w$, $\approx^B_w=\approx^A_3=\{\{1\}, \{2\},\{3\},\{ 4\}, \{5\}\}$. The agents' common prior is the uniform one, and two acts $f$ and $g$ have utility as follows: $f(1)=f(5)=1$, $f(2)=f(4)=5$, $f(3)=7$; $g(w)=4$ for all $w \in W$. If the agents update by conditionalization on their implicit knowledge, then Alice will invariably maximize utility by choosing $f$ (since in states $2, 3, 4$ she is certain it does better, and in states $1$ and $5$ she expects to gain $1/3\cdot1+1/3\cdot5 + 1/3 \cdot 7>4$). Bob meanwhile does not update at all, so that he strictly prefers $g$ (since $4>2/5\cdot 1 + 2/5 \cdot 5 + 1/5\cdot 7$).

\section{Conclusion}

Standard state space models of attention and conceivability are at least as successful as current non-standard state space models. The non-standard models are, however, more complicated, and it is unclear that this complexity affords any advantages in predictive strength or accuracy. Standard state space models of these phenomena promise to lead to a rich and rewarding theory, posing technical and conceptual challenges, and offering connections to related work by linguists, philosophers and logicians -- as well as work on bounded reasoning elsewhere in economic theory.

\section{Acknowledgments}

Thanks to Eddie Dekel, Bart Lipman and Tim Williamson for very helpful discussions of drafts of this paper. Peter Fritz was supported by an AHRC/Scatcherd European Scholarship and the German Academic Exchange Service; Harvey Lederman acknowledges the support of the Clarendon Fund.

\bibliographystyle{eptcs}
\bibliography{awarenessbib}

\begin{appendices}

\section{Material Supporting Section 2}\label{supporting-section-2}

\subsection{Stronger Assumptions about Knowledge}

The model used in Theorem~\ref{thm-counterexample} already validates a number of attractive axioms on knowledge, suggesting that strengthening DLR's constraints on knowledge is unlikely to yield an interesting triviality result. In particular, the following axioms are valid on the model:
\begin{itemize}
\item[] Distribution: $(Kp\wedge Kq)\to K(p\wedge q)$
\item[] Anti-Necessitation: $\neg K\bot$
\item[] Reflexivity: $Kp\to p$
\item[] Positive Introspection: $Kp\to KKp$
\end{itemize}

We can show more systematically that any strengthening of the axioms of knowledge which rules out unawareness does so trivially in the same way as Negative Introspection does. On the very mild assumption that the agent doesn't know the contradiction $\bot$, we can characterize the conditions under which a given model for the knowledge of an agent can be extended to an unawareness model in which the agent is unaware of a given $p$ at a given point $\alpha$ in which DLR's three axioms are valid. Let a model $M'$ \emph{extend} a \emph{knowledge model} $M=\langle\Omega,k\rangle$ just in case $M'=\langle\Omega,k,a\rangle$ for some function $a:2^\Omega\to 2^\Omega$.

\begin{thm}\label{thm-extension}
Let $M=\langle\Omega,k\rangle$ be a knowledge model, $\alpha\in\Omega$ and $E\subseteq\Omega$ such that Anti-Necessitation is valid in $\alpha$. $M$ has an extension such that Plausibility, \KUI{} and \AUI{} are valid in $\alpha$ and $\alpha\notin a(E)$ if and only if

(i) $\alpha\in-k(E)\cap-k-k(E)$, and

(ii) there is an event $F$ such that $a\in F\cap-k(F)\cap-k-k(F)$.\footnote{This result also holds if we replace Plausibility by $n$-Plausibility, for all natural numbers $n$, and (i) and (ii) by the correspondingly iterated conditions.}
\end{thm}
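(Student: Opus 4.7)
The plan is to prove the two implications separately; both are constructive, and the interest lies in identifying the right witness $F$ and the right function $a$.

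For the forward direction, suppose $\langle\Omega,k,a\rangle$ extends $M$ with the three axioms valid in $\alpha$ and $\alpha\notin a(E)$. Plausibility at $\alpha$, applied to $E$, immediately yields (i). For (ii), first use \AUI{} at $\alpha$ to get $\alpha\notin a(-a(E))$, and then apply Plausibility at $\alpha$ to the event $-a(E)$ to conclude $\alpha\in -k(-a(E))\cap -k-k(-a(E))$. Taking $F:=-a(E)$, membership $\alpha\in F$ is just $\alpha\notin a(E)$, and the remaining two conjuncts of (ii) are exactly what Plausibility delivered.

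For the backward direction, fix an $F$ as in (ii) and define $a\colon 2^\Omega\to 2^\Omega$ by
\[
  a(G) = \begin{cases}-F & \text{if } G\in\{E,F\},\\ \Omega & \text{otherwise.}\end{cases}
\]
Since $\alpha\in F$, we have $\alpha\notin a(E)$, and more generally $\alpha\notin a(G)$ iff $G\in\{E,F\}$. Plausibility at $\alpha$ need only be checked for $G\in\{E,F\}$, where (i) and (ii) supply exactly $\alpha\in -k(G)\cap -k-k(G)$. \AUI{} at $\alpha$ is vacuous for $G\notin\{E,F\}$; for $G\in\{E,F\}$ we have $-a(G)=F\in\{E,F\}$, so $a(-a(G))=-F$ and indeed $\alpha\notin a(-a(G))$. \KUI{} at $\alpha$ requires $\alpha\notin k(-a(G))$ for \emph{every} $G$: when $G\in\{E,F\}$ we have $-a(G)=F$ and $\alpha\notin k(F)$ by (ii); when $G\notin\{E,F\}$ we have $-a(G)=\emptyset$ and $\alpha\notin k(\emptyset)$ by Anti-Necessitation.

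The main subtlety lies in catering to \KUI{}, which ranges over all events. Keeping the image of the map $G\mapsto -a(G)$ confined to the two values $\{F,\emptyset\}$ reduces the universal claim to the two witnesses already at hand (from (ii) and Anti-Necessitation), and this is precisely why the ``otherwise'' branch is set to $\Omega$ rather than to some other event containing $\alpha$. A corresponding plan covers the footnote remark about $n$-Plausibility: iterating the forward argument (applying $n-1$ further Plausibility steps to the events $-k(-a(E)),-k-k(-a(E)),\dots$) gives the iterated version of (i)--(ii), while the same construction of $a$ validates $n$-Plausibility because, again, only the two events $E$ and $F$ need to be checked.
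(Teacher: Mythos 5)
Your proof is correct and follows essentially the same route as the paper's: the forward direction extracts (i) from Plausibility applied to $E$ and (ii) by taking $F=-a(E)$ via \AUI{} and Plausibility, and the backward direction uses exactly the paper's awareness function ($a(E)=a(F)=-F$, $\Omega$ otherwise), with Anti-Necessitation handling \KUI{} on the ``otherwise'' events. You merely spell out the verification the paper leaves as routine.
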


\begin{proof}
Assume first that (i) and (ii). Let $a:2^\Omega\to 2^\Omega$ be defined so that $a(E)=a(F)=-F$, and $a(G)=\Omega$ for all other events $G$, and consider the model $M'=\langle\Omega,k,a\rangle$. It is routine to verify that Plausibility, \KUI{} and \AUI{} are valid in $\alpha$, and $\alpha\notin a(E)$, appealing to Anti-Necessitation in the proof for \KUI{}.

For the converse, note that (i) follows the validity of Plausibility in $\alpha$. For (ii), let $F=-a(E)$. Then $\alpha\in F$, by \AUI{}, $\alpha\notin a(F)$, and so by Plausibility, $\alpha\in-k(F)\cap-k-k(F)$. 
\end{proof}

In particular, as long as the constraints on knowledge allow for there to be an event $E$ and a state $\alpha\in E$ such that $\alpha\in-k(E)\cap-k-k(E)$, standard state space models and DLR's three axioms will not preclude non-trivial unawareness.

\subsection{Stronger Assumptions about Awareness}

These results demonstrate that no plausible strengthening of the axioms governing knowledge will re-instate triviality. But what if we strengthen the axioms on awareness themselves?

To investigate this issue formally, extend the language $L$ by a unary operator $\CK$ for common knowledge. To define its interpretation on a model $M=\langle\Omega,k,a\rangle$, derive the following functions on events: $k^1(E)=k(E)$, $k^{n+1}(E)=kk^n(E)$, and $\ck(E)=\bigcap_{1\leq n<\omega}$ $k^n(E)$.
\begin{itemize}
\item[] $\llbracket\CK\varphi\rrbracket_{M,v}=\ck(\llbracket\varphi\rrbracket_{M,v})$
\end{itemize}
With this, we consider the following additional axioms on awareness:
\begin{itemize}
\item[] $\CK$-Plausibility: $Ap\to\CK(Up\to(\neg Kp\wedge\neg K\neg Kp))$
\item[] $\CK$-\KUI{}: $Ap\to\CK(\neg KUp)$
\item[] $\CK$-\AUI{}: $Ap\to\CK(Up\to UUp)$
\end{itemize}

These additional axioms are also compatible with non-trivial unawareness. In fact, they are valid in state $\alpha$ of the example in the proof of Theorem~\ref{thm-counterexample}. More generally, Theorem~\ref{thm-extension} can be extended straightforwardly to these three additional axioms, given the weak assumption that Necessitation and Anti-Necessitation are valid:

\begin{thm}\label{thm-CK-extension}
Let $M=\langle\Omega,k\rangle$ be a knowledge model in which Necessitation and Anti-Necessitation are valid, $\alpha\in\Omega$ and $E\subseteq\Omega$. $M$ has an extension such that Plausibility, \KUI{}, \AUI{}, $\CK$-Plausibility, $\CK$-\KUI{} and $\CK$-\AUI{} are valid in $\alpha$ and $\alpha\notin a(E)$ if and only if

(i) $\alpha\in-k(E)\cap-k-k(E)$, and

(ii) there is an event $F$ such that $\alpha\in F\cap-k(F)\cap-k-k(F)$.\footnote{Again, we can extend this result to $n$-Plausibility for all $n$ analogously to the extension in the previous footnote.}
\end{thm}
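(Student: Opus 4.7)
My plan is to piggyback on the construction from the proof of Theorem~\ref{thm-extension} and verify only that the three $\CK$-strengthenings hold additionally at $\alpha$. The ($\Leftarrow$) direction is immediate: since the hypothesis lists Plausibility, \KUI{} and \AUI{} among the axioms valid at $\alpha$, the right-to-left direction of Theorem~\ref{thm-extension} yields (i) and (ii) directly.

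For ($\Rightarrow$), assume (i) and (ii) and re-use exactly the same extension: define $a(E)=a(F)=-F$ and $a(G)=\Omega$ for every other event $G$. By Theorem~\ref{thm-extension}, the resulting model already validates Plausibility, \KUI{} and \AUI{} at $\alpha$ and satisfies $\alpha\notin a(E)$. What remains is to check the $\CK$-axioms at $\alpha$.

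Fix a valuation $v$ and let $G=v(p)$. If $G\in\{E,F\}$ then $a(G)=-F$; since (ii) gives $\alpha\in F$, we have $\alpha\notin a(G)$, so the antecedent $Ap$ is false at $\alpha$ and each $\CK$-axiom holds vacuously. If $G\notin\{E,F\}$ then $a(G)=\Omega$, and I would compute the event of the formula embedded under $\CK$ in each case: for $\CK$-Plausibility it is $a(G)\cup(-k(G)\cap-k-k(G))=\Omega$; for $\CK$-\KUI{} it is $-k(-a(G))=-k(\emptyset)=\Omega$ by Anti-Necessitation; for $\CK$-\AUI{} it is $a(G)\cup -a(-a(G))=\Omega$. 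Because Necessitation gives $k(\Omega)=\Omega$, and hence $k^n(\Omega)=\Omega$ for every $n\geq 1$, we conclude $\ck(\Omega)=\Omega$, so $\alpha\in\ck(\Omega)$ as required.

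The one bookkeeping point I would pause on is the edge case $G=\emptyset$, which appears after applying $-a$ inside the $\CK$-axioms: I need $\emptyset\notin\{E,F\}$ so that $a(\emptyset)=\Omega$ and the computation closes. But if $E=\emptyset$ then $-k-k(E)=-k(\Omega)=\emptyset$ by Necessitation, contradicting (i); and if $F=\emptyset$ then $\alpha\in F$ becomes $\alpha\in\emptyset$, impossible. Beyond this, the rest is a routine extension of Theorem~\ref{thm-extension}, and I do not anticipate any substantive obstacle.
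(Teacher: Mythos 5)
Your proof is correct and takes essentially the same route as the paper's: reuse the extension constructed in Theorem~\ref{thm-extension}, note that whenever the antecedent $A_ip$ holds at $\alpha$ the construction forces $a(G)=\Omega$, so each formula embedded under $\CK$ denotes $\Omega$, and Necessitation (plus Anti-Necessitation for $\CK$-\KUI{}) yields $\alpha\in\ck(\Omega)$. Two trivial remarks: your $(\Leftarrow)$/$(\Rightarrow)$ labels are swapped relative to the biconditional as stated, and the $G=\emptyset$ bookkeeping is dispensable, since with $a(G)=\Omega$ the relevant unions are already $\Omega$ and $-k(\emptyset)=\Omega$ needs only Anti-Necessitation.
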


\begin{proof}
We establish (i) and (ii) as in the proof of Theorem~\ref{thm-extension}. Assuming (i) and (ii), we define $a$ as in the proof of Theorem~\ref{thm-extension}, where it is noted that Plausibility, \KUI{} and \AUI{} are valid in $\alpha$, and $\alpha\notin a(E)$. For the $\CK$-conditions, consider any event $G$ such that $\alpha\in a(G)$. Then by construction of $a$, $a(G)=\Omega$. Therefore $a(G)\cup H=\Omega$ for any event $H$, and so $\alpha\in\ck(a(G)\cup H)$ by Necessitation, which establishes the validity of $\CK$-Plausibility and $\CK$-\AUI{} in $\alpha$. For $\CK$-\KUI{}, note that by Anti-Necessitation, $k-a(G)=\emptyset$, so $-k-a(G)=\Omega$, from which $\alpha\in\ck(-k-a(G))$ follows again by Necessitation.
\end{proof}

\section{Proof of Theorem 4}

\begin{proof}
Since the formulas derivable in this calculus form a classical model logic in the sense of \cite{segerberg1971aeicml}, we can apply the standard canonical model construction technique; in particular, consider the smallest canonical model (see \cite[chapter~9]{chellas1980ml}, especially p.~254). Consider any formula $\varphi$ not provable in the above calculus, and let $\Gamma$ be the set of subformulas of $\varphi$ closed under Boolean combinations. A standard filtration of the canonical model through $\Gamma$ produces a finite model in which $\varphi$ is false. It is routine to prove that the neighborhood function for $A_i$ associates with each state a field of sets; since the model is finite this field is generated by an equivalence relation, as required. The above filtration can be chosen in such a way as to preserve the transitivity of the relation for $K_i$; reflexivity is preserved by any filtration (see, e.g. \cite[chapter~3]{chellas1980ml}, especially p.~106, or \cite{BlueBook} p.~80).

The above argument also establishes that the logic thus axiomatized has the finite model property and so is decidable.
\end{proof}

\end{appendices}

\end{document}